\documentclass[11pt|a4paper]{article}
\usepackage{acl2015}
\usepackage{times}
\pdfoutput=1
\usepackage{url}
\usepackage{multirow}
\usepackage{amsmath,amsthm,amsfonts}
\usepackage{graphicx,caption,subcaption}
\usepackage{float}
\usepackage{array}
\usepackage{algorithm}
\usepackage{algorithmic}

\DeclareMathOperator{\Q}{\textsc{Q}}
\DeclareMathOperator{\Op}{\odot}
\DeclareMathOperator{\Val}{\textsc{Val}}
\DeclareMathOperator{\Pair}{\textsc{Pair}}
\DeclareMathOperator{\Rel}{\textsc{Irr}}

\DeclareMathOperator{\I}{\mathcal{I}}
\DeclareMathOperator{\T}{\mathcal{T}}

\DeclareMathOperator{\Score}{\mathrm{Score}}

\newcolumntype{L}[1]{>{\raggedright\let\newline\\\arraybackslash\hspace{0pt}}m{#1}}
\newcolumntype{C}[1]{>{\centering\let\newline\\\arraybackslash\hspace{0pt}}m{#1}}
\newcolumntype{R}[1]{>{\raggedleft\let\newline\\\arraybackslash\hspace{0pt}}m{#1}}

\newenvironment{definition}[1][Definition]{\begin{trivlist}
\item[\hskip \labelsep {\bfseries #1}]}{\end{trivlist}}
\newtheorem{theorem}{Theorem}[section]
\setlength{\textfloatsep}{2pt}

\title{Solving General Arithmetic Word Problems}

\author{Subhro Roy \quad\textnormal{and}\quad Dan Roth \\
  University of Illinois, Urbana Champaign \\
  {\tt \{sroy9, danr\}@illinois.edu}}


\date{}

\begin{document}
\maketitle
\begin{abstract}
  This paper presents a novel approach to automatically solving
  arithmetic word problems. This is the first algorithmic approach
  that can handle arithmetic problems with multiple steps and
  operations, without depending on additional annotations or
  predefined templates.  We develop a theory for expression trees that
  can be used to represent and evaluate the target arithmetic
  expressions; we use it to uniquely decompose the target arithmetic
  problem to multiple classification problems; we then compose an
  expression tree, combining these with world knowledge through a
  constrained inference framework. Our classifiers gain from the use
  of {\em quantity schemas} that supports better extraction of
  features. Experimental results show that our method outperforms
  existing systems, achieving state of the art performance on
  benchmark datasets of arithmetic word problems.

\end{abstract}

\section{Introduction}
\label{sec:intro}
  
  In recent years there is growing interest in understanding natural
  language text for the purpose of answering science related questions
  from text as well as quantitative problems of various kinds.  In
  this context, understanding and solving arithmetic word problems is
  of specific interest. Word problems arise naturally when reading
  the financial section of a newspaper, following election coverage,
  or when studying elementary school arithmetic word problems.  These
  problems pose an interesting challenge to the NLP community, due to
  its concise and relatively straightforward text, and seemingly
  simple semantics. Arithmetic word problems are usually directed
  towards elementary school students, and can be solved by combining
  the numbers mentioned in text with basic operations (addition,
  subtraction, multiplication, division). They are simpler than
  algebra word problems which require students to identify variables,
  and form equations with these variables to solve the problem.

  Initial methods to address arithmetic word problems have mostly
  focussed on subsets of problems, restricting the number or the type
  of operations used \cite{RoyViRo15,HosseiniHaEt14} but could not
  deal with multi-step arithmetic problems involving all four basic
  operations.  The template based method of \cite{KushmanZeBa14}, on
  the other hand, can deal with all types of problems, but implicitly
  assumes that the solution is generated from a set of predefined
  equation templates.

  In this paper, we present a novel approach which can solve a general
  class of arithmetic problems without predefined equation
  templates. In particular, it can handle multiple step arithmetic
  problems as shown in Example $1$.

  \begin{table}[H]
   \centering \small
   \begin{tabular}{|p{7cm}|}
     \hline Example 1 \\
     \hline {\em Gwen was organizing
       her book case making sure each of the shelves had exactly $9$
       books on it. She has $2$ types of books - mystery books
       and picture books. If she had $3$ shelves of mystery books and
       $5$ shelves of picture books, how many books did she have in
       total?} \\
     \hline
   \end{tabular}
   \label{tab:example1}
  \end{table}

  The solution involves understanding that the number of shelves needs
  to be summed up, and that the total number of shelves needs to be
  multiplied by the number of books each shelf can hold. In addition,
  one has to understand that the number ``2'' is not a direct part of
  the solution of the problem.

  While a solution to these problems eventually requires composing
  multi-step numeric expressions from text, we believe that directly
  predicting this complex expression from text is not feasible.

  At the heart of our technical approach is the novel notion of an
  {\em Expression Tree}. We show that the arithmetic expressions we are
  interested in can always be represented using an Expression Tree
  that has some unique decomposition properties. This allows us to
  decompose the problem of mapping the text to the arithmetic
  expression to a collection of simple prediction problems, each
  determining the lowest common ancestor operation between a pair of
  quantities mentioned in the problem. We then formulate the decision
  problem of composing the final expression tree as a joint inference
  problem, via an objective function that consists of all these
  decomposed prediction problems, along with legitimacy and background
  knowledge constraints.
  
  Learning to generate the simpler decomposed expressions allows us to support
  generalization across problems types. In particular, our system
  could solve Example $1$ even though it has never seen a problem that
  requires both addition and multiplication operations.

  We also introduce a second concept, that of {\em quantity schema},
  that allows us to focus on the information relevant to each quantity
  mentioned in the text. We show that features extracted from quantity
  schemas help reasoning effectively about the solution. Moreover,
  quantity schemas help identify unnecessary text snippets in the
  problem text.  For instance, in Example $2$, the information that
  ``Tom washed cars over the weekend'' is irrelevant; he could have
  performed any activity to earn money. In order to solve the problem,
  we only need to know that he had \$$76$ last week, and now he
  has \$$86$.

  \setlength{\tabcolsep}{6pt}
  \begin{table}[H]
   \centering \small
   \begin{tabular}{|p{7cm}|}
     \hline Example 2 \\
     \hline {\em Last week Tom had \$$74$. He washed cars
     over the weekend and now has \$$86$. How much money did he make
     from the job?} \\
     \hline
   \end{tabular}
   \label{tab:example2}
  \end{table}

  We combine the classifiers' decisions using a constrained inference
  framework that allows for incorporating world knowledge as
  constraints. For example, we deliberatively incorporate the
  information that, if the problems asks about an ``amount'', the
  answer must be positive, and if the question starts with ``how
  many'', the answer will most likely be an integer.

  Our system is evaluated on two existing datasets of arithmetic word
  problems, achieving state of the art performance on both. We also
  create a new dataset of multistep arithmetic problems, and show that
  our system achieves competitive performance in this challenging
  evaluation setting.

  The next section describes the related work in the area of automated
  math word problem solving. We then present the theory of expression
  trees and our decomposition strategy that is based on it. Sec. 4
  presents the overall computational approach, including the way we
  use quantity schemas to learn the mapping from text to expression tree
  components. Finally, we discuss our experimental study and conclude.

\section{Related Work}
\label{sec:related}
  Previous work in automated arithmetic problem solvers has focussed
  on a restricted subset of problems. The system described
  in \cite{HosseiniHaEt14} handles only addition and subtraction
  problems, and requires additional annotated data for verb
  categories. In contrast, our system does not require any additional
  annotations and can handle a more general category of problems.  The
  approach in \cite{RoyViRo15} supports all four basic operations, and
  uses a pipeline of classifiers to predict different properties of
  the problem. However, it makes assumptions on the number of
  quantities mentioned in the problem text, as well as the number of
  arithmetic steps required to solve the problem. In contrast, our
  system does not have any such restrictions, effectively handling
  problems mentioning multiple quantities and requiring multiple
  steps. Kushman's approach to automatically solving algebra word
  problems \cite{KushmanZeBa14} might be the most related to ours. It
  tries to map numbers from the problem text to predefined equation
  templates. However, they implicitly assume that similar equation
  forms have been seen in the training data. In contrast, our system
  can perform competitively, even when it has never seen similar
  expressions in training.

  There is a recent interest in understanding text for the purpose of
  solving scientific and quantitative problems of various kinds. Our
  approach is related to work in understanding and solving elementary
  school standardized tests \cite{Clark15}. The system described
  in \cite{BSCLHHCM14} attempts to automatically answer biology
  questions, by extracting the structure of biological processes from
  text. There has also been efforts to solve geometry questions by
  jointly understanding diagrams and associated text \cite{SHFE14}. A
  recent work \cite{SadeghiKuFa15} tries to answer science questions
  by visually verifying relations from images.

  Our constrained inference module falls under the general framework
  of Constrained Conditional Models (CCM) \cite{ChangRaRo12}. In
  particular, we use the $L+I$ scheme of CCMs, which predicts
  structured output by independently learning several simple
  components, combining them at inference time. This has been
  successfully used to incorporate world knowledge at inference time,
  as well as getting around the need for large amounts of jointly
  annotated data for structured
  prediction \cite{RothYi05,PunyakanokRoYi05,PunyakanokRoYi08,ClarkeLa06,BarzilayLa06,RoyViRo15}.

\section{Expression Tree and Problem Decomposition}
\label{sec:decomposition}

  We address the problem of automatically solving arithmetic word
  problems. The input to our system is the problem text $P$, which
  mentions $n$ quantities $q_1, q_2, \ldots, q_n$. Our goal is to map
  this problem to a read-once arithmetic expression $E$ that, when
  evaluated, provides the problem's solution. We define a read-once
  arithmetic expression as one that makes use of each quantity at most
  once. We say that $E$ is a {\em valid} expression, if it is such a
  Read-Once arithmetic expression, and we only consider in this work
  problems that {\em can be} solved using valid expressions (it's
  possible that they can be solved also with invalid expressions).


    An expression tree $\T$ for a valid expression $E$ is a binary
    tree whose leaves represent quantities, and each internal node
    represents one of the four basic operations. For a non-leaf node
    $n$, we represent the operation associated with it as $\Op(n)$,
    and its left and right child as $lc(n)$ and $rc(n)$
    respectively. The numeric value of the quantity associated with a
    leaf node $n$ is denoted as $\Q(n)$. Each node $n$ also has a value
    associated with it, represented as $\Val(n)$, which can be
    computed in a recursive way as follows:
    \begin{multline}\label{expr:formula}
      \Val(n) = \\
      \begin{cases}
        \Q(n) & \mbox{if $n$ is a leaf} \\
        \Val(lc(n)) \Op(n) \Val(rc(n)) &  \mbox{otherwise }
      \end{cases}
    \end{multline}
    For any expression tree $\T$ for expression $E$ with root node
    $n_{root}$, the value of $\Val(n_{root})$ is exactly equal to the
    numeric value of the expression $E$. Therefore,
    this gives a natural representation of numeric expressions,
    providing a natural parenthesization of the numeric
    expression. Fig \ref{fig:3} shows an example of an arithmetic
    problem with solution expression and an expression tree for the
    solution expression.
    
    \begin{figure}[ht]
    \centering \small
    \begin{tabular}{|p{3.5cm}|p{3.5cm}|}
    \hline \multicolumn{2}{|p{7cm}|}{Problem} \\
    \hline \multicolumn{2}{|p{7cm}|}{ 
       {\em Gwen was organizing her book case making sure each of the
       shelves had exactly $9$ books on it. She has $2$ types of books
       - mystery books and picture books. If she had $3$ shelves of
       mystery books and $5$ shelves of picture books, how many books
       did she have total?}} \\
     \hline\hline Solution & Expression Tree of Solution\\
     \hline \begin{center}$(3+5)\times9 = 72$\end{center} &
     \begin{center}
     \includegraphics[width=0.15\textwidth]{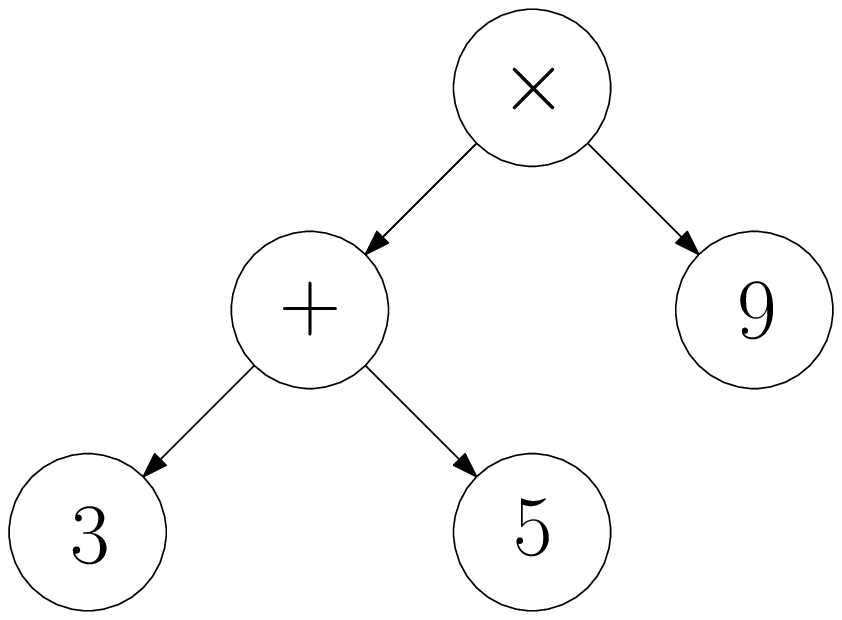}
     \end{center}\\
     \hline
    \end{tabular}
    \caption{\footnotesize An arithmetic word problem, solution expression
    and the corresponding expression tree}
    \label{fig:3}
    \end{figure}

    \begin{definition}
      An expression tree $\T$ for a valid expression $E$ is called 
      \textbf{\em monotonic} if it satisfies the following conditions:
      
      \begin{enumerate}

        \item If an addition node is connected to a
        subtraction node, then the subtraction node is the parent.

        \item If a multiplication node is connected to a division node, then
        the division node is the parent.

        \item Two subtraction nodes cannot be connected to each other.

        \item Two division nodes cannot be connected to each other.

      \end{enumerate}
        
    \end{definition}
    
    Fig \ref{fig:twotrees} shows two different expression trees for the same
    expression.  Fig \ref{fig:tree2} is monotonic whereas
    fig \ref{fig:tree1} is not.
    
    \begin{figure}[ht]
    \centering
        \begin{subfigure}[b]{0.46\linewidth}
        \centering
        \includegraphics[width=\linewidth]{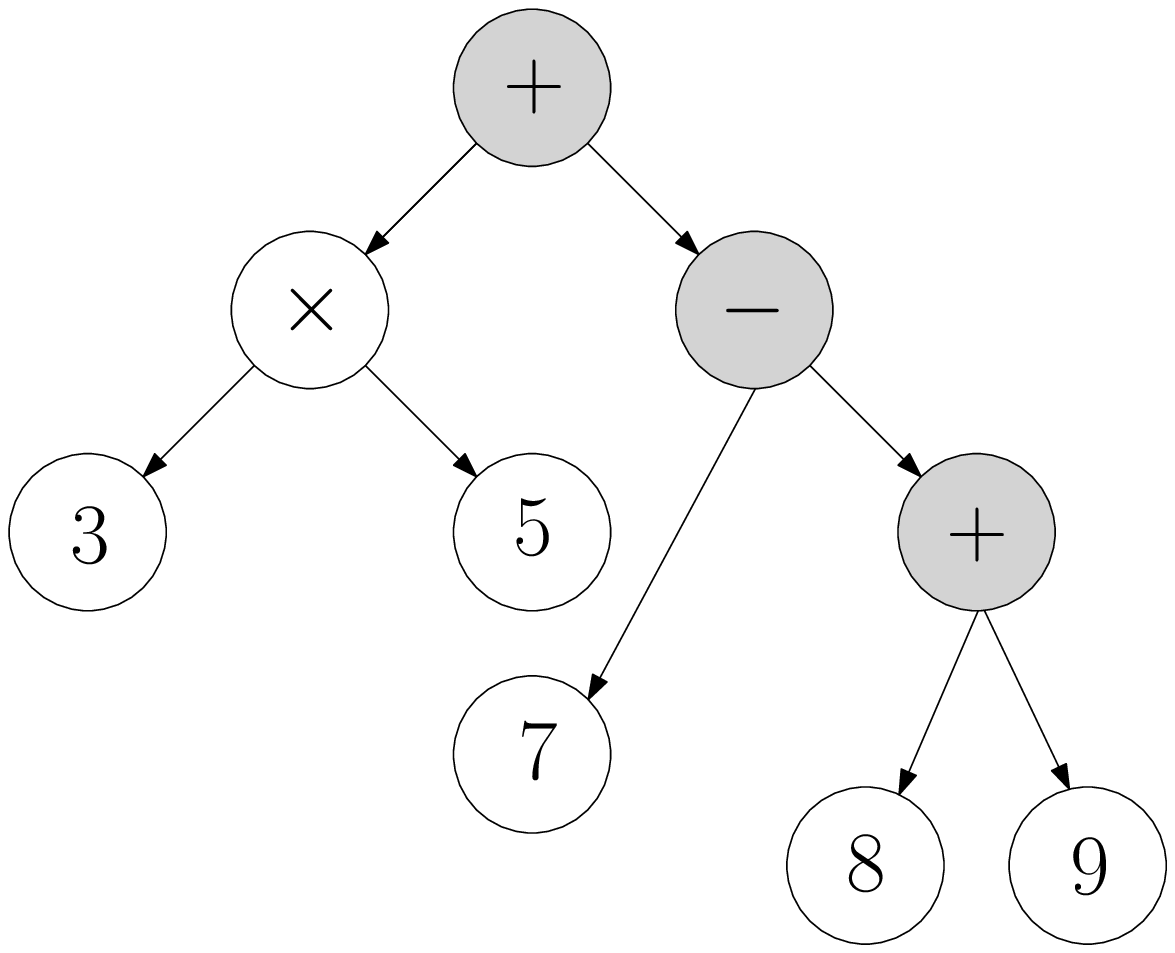}
        \caption{}
        \label{fig:tree1}
        \end{subfigure}%
        \quad
        \begin{subfigure}[b]{0.46\linewidth}
        \centering
        \includegraphics[width=\linewidth]{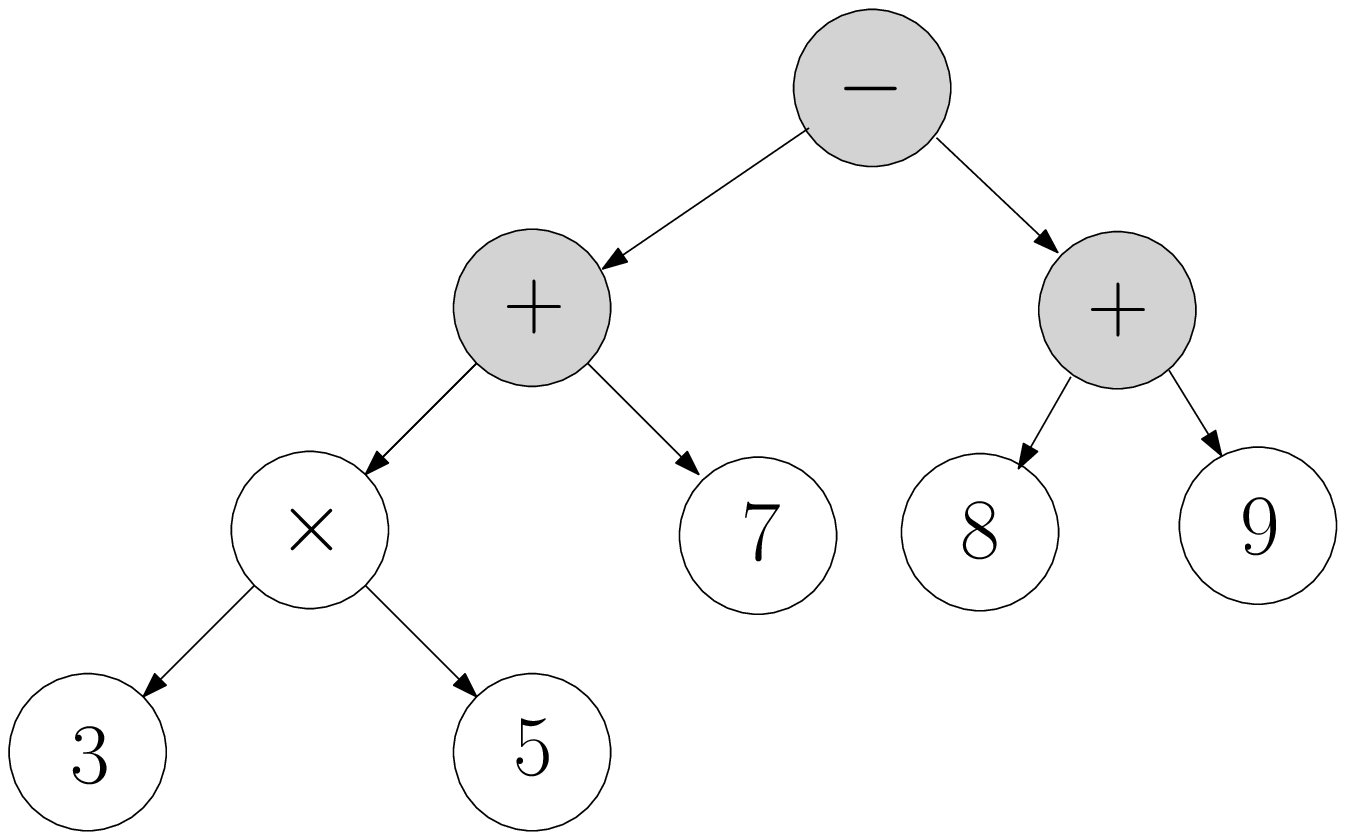}
        \caption{}
        \label{fig:tree2}
        \end{subfigure}
        \caption{\footnotesize Two different expression trees for the
        numeric expression $(3 \times 5)+7-8-9$. The right one is
        monotonic, whereas the left one is not. }
        \label{fig:twotrees}
    \end{figure}    

    Our decomposition relies on the idea of monotonic expression
    trees.  We try to predict for each pair of quantities $q_i,q_j$,
    the operation at the lowest common ancestor (LCA) node of the
    monotonic expression tree for the solution expression. We also
    predict for each quantity, whether it is relevant to the solution.
    Finally, an inference module combines all these predictions.

    In the rest of the section, we show that for any pair of
    quantities $q_i, q_j$ in the solution expression, any monotonic
    tree for the solution expression has the same LCA
    operation. Therefore, predicting the LCA operation becomes a
    multiclass classification problem.
    
    The reason that we consider the monotonic representation of the
    expression tree is that different trees could otherwise give
    different LCA operation for a given pair of quantities. For
    example, in Fig \ref{fig:twotrees}, the LCA operation for
    quantities $5$ and $8$ can be $+$ or $-$, depending on which tree
    is considered.

    \begin{definition}

      We define an \textbf{\em addition-subtraction chain} of an
      expression tree to be the maximal connected set of nodes labeled
      with addition or subtraction.

      The nodes of an addition-subtraction (AS) chain $C$ represent a
      set of terms being added or subtracted. These terms are
      sub-expressions created by subtrees rooted at neighboring nodes
      of the chain. We call these terms the \textbf{\em chain
      terms} of $C$, and the whole expression, after node operations
      have been applied to the chain terms, the \textbf{\em chain
      expression} of $C$. For example, in fig \ref{fig:twotrees}, the
      shaded nodes form an addition-subtraction chain. The chain
      expression is $(3 \times 5)+7 - 8 - 9$, and the chain terms are
      $3 \times 5$, $7$, $8$ and $9$. We define a \textbf{\em
      multiplication-division (MD) chain} in a similar way.

    \end{definition}
      
    \begin{theorem}

      Every valid expression can be represented by a
      monotonic expression tree.

    \end{theorem}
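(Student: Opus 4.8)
The plan is to start from an arbitrary expression tree $\T$ for the valid expression $E$ and transform it into a monotonic one by reorganizing each addition--subtraction chain and each multiplication--division chain independently. The key observation is that all four monotonicity conditions constrain only adjacencies \emph{within} a single AS chain (conditions~1 and~3) or \emph{within} a single MD chain (conditions~2 and~4); an edge joining a $+/-$ node to a $\times/\div$ node is never prohibited. Hence a tree is monotonic as soon as every one of its chains is ``internally monotonic'', and since the chain terms hanging off a chain are exactly the subtrees rooted at its neighbours (each such neighbour being a $\times$ node, a $\div$ node, or a leaf, by maximality of the chain), we may rewrite the skeleton of one chain without disturbing any other. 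It therefore suffices to show that the chain expression of each AS chain (resp.\ MD chain) is realized by some internally monotonic chain with the same chain terms.

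First I would characterize the internally monotonic AS chains. Suppose $s$ is a subtraction node of such a chain $C$. If $s$ had a parent inside $C$, that parent would be a $+$ node (violating condition~1, since then an addition node is the parent of a subtraction node) or a $-$ node (violating condition~3); hence $s$ is the root of $C$. So an internally monotonic AS chain contains at most one subtraction node, that node is the chain root, and every other node of the chain is an addition node. The symmetric statement holds for MD chains with ``subtraction'' replaced by ``division'' and ``addition'' by ``multiplication''.

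Next I would produce such a realization. Let $C$ be an AS chain with chain terms $t_1,\dots,t_k$. Walking from the root of $C$ downward, each $t_i$ inherits a sign $\varepsilon_i\in\{+1,-1\}$ obtained by multiplying a factor $-1$ for every step that descends into the right child of a subtraction node; one checks by induction on the chain that $\Val$ of the chain root equals $\sum_i \varepsilon_i\,\Val(t_i)$. The leftmost chain term is reached by taking only left children, so its sign is $+1$ and $S^{+}=\{i:\varepsilon_i=+1\}$ is nonempty. Using commutativity and associativity of addition, build a tree $X$ of addition nodes over $\{t_i:i\in S^{+}\}$ and, if $S^{-}=\{j:\varepsilon_j=-1\}$ is nonempty, a tree $Y$ of addition nodes over $\{t_j:j\in S^{-}\}$; let the new chain be $X$ if $S^{-}=\emptyset$, and the subtraction node $X-Y$ otherwise. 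This chain is internally monotonic by the characterization above, it uses exactly the original chain terms (so each quantity still occurs at most once and the expression stays read-once), and its value is $\sum_{i\in S^{+}}\Val(t_i)-\sum_{j\in S^{-}}\Val(t_j)=\sum_i\varepsilon_i\,\Val(t_i)$, i.e.\ the original chain value. The MD case is identical after replacing $(+,-)$ by $(\times,\div)$, the rewriting identities by $x/y/z=x/(y\cdot z)$ and $x/(y/z)=(x\cdot z)/y$, and assuming, as the notion of a valid solution expression already tacitly does, that no division by zero occurs.

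Performing this replacement on every chain yields an expression tree that is internally monotonic on each chain, whose only cross-chain adjacencies are the unconstrained $+/-$ to $\times/\div$ ones, and which still evaluates to the value of $E$; that tree is the desired monotonic expression tree. The only real work is the bookkeeping in the previous paragraph: tracking the signs $\varepsilon_i$ correctly and verifying that the rebuilt chain preserves the chain value, which I expect to be the main obstacle since it is where the algebraic identities actually do their job. Note there is no termination issue, because each chain is rewritten once in a single pass rather than by iterated local moves.
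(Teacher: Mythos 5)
Your proposal is correct and follows essentially the same route as the paper: restructure each maximal addition--subtraction (resp.\ multiplication--division) chain into a single subtraction (resp.\ division) node at the top, with the positively signed chain terms combined by additions on the left and the negatively signed ones on the right. You simply make explicit several points the paper leaves implicit --- the per-chain locality of the monotonicity conditions, the sign bookkeeping that verifies value preservation, and the fact that the leftmost term always carries a positive sign --- so this is a more careful write-up of the paper's own argument rather than a different one.
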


    \begin{proof}

      The proof is procedural, that is, we provide a method to convert
      any expression tree to a monotonic expression tree for the same
      expression. Consider a non-monotonic expression tree $E$, and
      without loss of generality, assume that the first condition for
      monotonicity is not valid. Therefore, there exists an addition
      node $n_i$ and a subtraction node $n_j$, and $n_i$ is the parent
      of $n_j$. Consider an addition-subtraction chain $C$ which
      includes $n_i, n_j$. We now replace the nodes of $C$ and its
      subtrees in the following way. We add a single subtraction node
      $n_-$. The left subtree of $n_-$ has all the addition chain
      terms connected by addition nodes, and the right subtree of
      $n_-$ has all the subtraction chain terms connected by addition
      nodes. Both subtrees of $n_-$ only require addition nodes, hence
      monotonicity condition is satisfied. We can construct the
      monotonic tree in Fig \ref{fig:tree2} from the non-monotonic
      tree of Fig \ref{fig:tree1} using this procedure. The addition
      chain terms are $3 \times 5$ and $7$, and the subtraction chain
      terms are $8$ and $9$. As as was described above, we introduce
      the root subtraction node in Fig \ref{fig:tree2} and attach the
      addition chain terms to the left and the subtraction chain terms
      to the right. The same line of reasoning can be used to handle
      the second condition with multiplication and division replacing
      addition and subtraction, respectively.

    \end{proof}

    \begin{theorem}\label{th:chain}

      Consider two valid expression trees $\T_1$ and $\T_2$ for the
      same expression $E$. Let $C_1$, $C_2$ be the chain containing
      the root nodes of $\T_1$ and $T_2$ respectively. The chain type
      (addition-subtraction or multiplication-division) as well as the
      the set of chain terms of $C_1$ and $C_2$ are identical.

    \end{theorem}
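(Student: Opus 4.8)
The plan is to treat the quantities occurring in $E$ as formal indeterminates, so that $E$ and every chain term of any expression tree for $E$ is an element of the rational function field $\mathbb{Q}(q_1,\dots,q_n)$, and ``the same expression $E$'' means equality there; this is the right reading, since two trees differing only by left/right--associativity of subtractions represent the same $E$, whereas two trees with equal numeric \emph{value} but different formal expressions need not. Three reductions set things up. First, since the root of any expression tree lies in its root chain, the chain expression of $C_1$ (and of $C_2$) is $E$ itself, and the chain terms are exactly the subtrees hanging off that chain, each of which is a single quantity or is rooted by an operation of the opposite kind (multiplicative if the chain is AS, additive if it is MD). Second, because the whole tree is read--once, the quantity sets of the chain terms are pairwise disjoint and their union is the set of quantities appearing in $E$. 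Hence it suffices to show that $E$ \emph{alone} determines (a) whether its root chain is AS or MD, (b) the partition of the quantities induced by the chain terms, and (c) the actual chain term attached to each block of that partition.

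For (a) and (b) I would use the \emph{additive linkage} relation on the quantities of $E$: call $q_a,q_b$ directly linked if $E$ cannot be written as $F+G$ with $q_a$ occurring only in $F$ and $q_b$ only in $G$, and let $\sim$ be its transitive closure; this depends only on $E$. If the root chain is AS, writing $E=\sum_i\epsilon_it_i$ with $\epsilon_i\in\{+1,-1\}$ exhibits, for any two quantities in different chain terms, exactly such a splitting, so each $\sim$--class lies inside a single chain term. Conversely, whenever a read--once subtree $t$ splits at its root as $t=u\odot v$ with $\odot$ multiplicative, every quantity $q_a$ of $u$ is directly linked to every quantity $q_b$ of $v$, because $\partial^2t/\partial q_a\partial q_b$ equals, up to a nonzero factor, $(\partial u/\partial q_a)(\partial v/\partial q_b)$, which is nonzero since a read--once formula over $\{+,-,\times,\div\}$ depends nontrivially on each of its variables (a short structural induction). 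So all quantities of a multiplicative--rooted chain term form one $\sim$--class; hence in the AS case the $\sim$--classes are \emph{precisely} the quantity sets of the chain terms (in particular there are at least two of them), and applying the same splitting remark at the root of an MD--rooted tree shows that then all quantities of $E$ form a single $\sim$--class. Therefore the root chain is AS iff $E$ has $\ge2$ $\sim$--classes and MD otherwise, which settles the chain--type claim; in the AS case the partition in (b) is read off from $\sim$, and in the MD case it is obtained symmetrically (e.g.\ via the dual multiplicative--linkage relation, equivalently by passing to $\log|E|$, which turns an MD chain into an additive one).

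For (c), fix a block $S$ of the now--known partition and write $E=\sum_i\epsilon_it_i$ (AS case) or $E=\prod_it_i^{\epsilon_i}$ (MD case), $t_S$ being the unique chain term on $S$. In the AS case, $t_S$ is the only summand depending on the variables of $S$, so differentiating $E$ in those variables recovers $\epsilon_St_S$ up to an additive constant; in the MD case, unique factorization in $\mathbb{Q}(q_1,\dots,q_n)$ does it --- the irreducible factors of the lowest--terms numerator and denominator of $E$ have variable sets contained in the blocks, and grouping those involving $S$ recovers $t_S$ up to a rational scalar. Closing these last gaps --- ruling out a spurious additive constant, sign, or scalar relating two candidate chain terms on the same variable set --- is the main obstacle and carries most of the work. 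The handle I would use is a structural analysis of read--once expressions: their reduced numerators and denominators have all coefficients in $\{+1,-1\}$ (variable--disjointness forbids any coefficient from cancelling or growing), their factorization patterns are controlled by the block partition, and the arithmetic has no unary minus; these together force the leftover constant to be $0$, the leftover scalar to be $1$, and the signs to agree, so $t_S$ coincides with its counterpart in the other tree. The one genuinely delicate point throughout is degenerate numeric data --- quantities equal to $0$, or numeric constants that happen to cancel --- which is precisely why the argument is phrased over formal indeterminates rather than over the numeric values of the quantities.
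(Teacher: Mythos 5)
Your route differs from the paper's and the first two-thirds of it is actually tighter than what the paper does: the paper argues informally that converting a sum/difference decomposition into a product/quotient decomposition (or re-grouping the terms of one AS chain into different terms) would require ``taking common terms'' or distributing, and hence would duplicate a quantity, violating read-once-ness. Your separability argument --- treating quantities as indeterminates, characterizing additive (resp.\ multiplicative) splittability by the vanishing of $\partial^2 E/\partial q_a\partial q_b$ (resp.\ of the mixed partial of $\log E$), and using the fact that a read-once formula depends nontrivially on every variable --- genuinely proves that $E$ alone determines the chain type and the partition of the quantities into chain-term blocks, which is more than the paper establishes rigorously for that half.

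The gap is in your step (c), and you flag it yourself. Having matched the blocks, you only get $\epsilon_S t_S=\epsilon'_S t'_S+c_S$ (AS case, with $\sum_S c_S=0$) or $t_S^{\epsilon_S}=\lambda_S\,(t'_S)^{\epsilon'_S}$ with $\prod_S\lambda_S=1$ (MD case), and the theorem needs $c_S=0$, $\lambda_S=1$ and matching signs/orientations. The ``handle'' you propose rests on structural claims about read-once expressions --- that the lowest-terms numerator and denominator have all coefficients in $\{+1,-1\}$, that their factorization pattern is controlled by the block partition --- which are themselves unproven and nontrivial: for instance, the $\{\pm1\}$-coefficient claim requires showing that in the recursive computation of numerator and denominator no monomial is ever shared between $N_uD_v$ and $D_uN_v$ and that the syntactic fraction is already coprime, and even granting all of it you never carry out the deduction that the leftover constant is $0$ and the scalar is $1$ (e.g.\ ruling out that an AS-rooted chain term equals a constant times the reciprocal of another AS-rooted read-once expression on the same variables, or that $\epsilon_S t_S$ and $\epsilon'_S t'_S$ differ by a nonzero constant when the denominator has no constant term). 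Since identifying the chain terms themselves is precisely the content of the second half of the theorem --- and is the same point at which the paper's own argument (``any such transformation forces duplicate quantities'') does the work --- your proposal as written does not yet prove the statement; it reduces it to an explicitly acknowledged open step.
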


    \begin{proof} 

      We first prove that the chains containing the roots are both AS
      or both MD, and then show that the chain terms are also
      identical. 

      We prove by contradiction that the chain type is same. Let
      $C_1$'s type be ``addition-subtraction'' and $C_2$'s type be
      ``multiplication-division'' (without loss of generality). Since
      both $C_1$ and $C_2$ generate the same expression $E$, we have
      that $E$ can be represented as sum (or difference) of two
      expressions as well as product(or division) of two expressions.
      Transforming a sum (or difference) of expressions to a product
      (or division) requires taking common terms from the
      expressions, which imply that the sum (or difference) had
      duplicate quantities. The opposite transformation adds same term
      to various expressions leading to multiple uses of the same
      quantity. Therefore, this will force at least one of $C_1$ and
      $C_2$ to use the same quantity more than once, violating
      validity.

      We now need to show that individual chain terms are also
      identical. Without loss of generality, let us assume that both
      $C_1$ and $C_2$ are ``addition-subtraction'' chains. Suppose the
      chain terms of $C_1$ and $C_2$ are not identical. The chain
      expression for both the chains will be the same (since they are
      root chains, the chain expressions has to be the same as
      $E$). Let the chain expression for $C_1$ be $\sum_{i} t_i
      - \sum_i t'_i$, where $t_i$'s are the addition chain terms and
      $t'_i$ are the subtraction chain terms. Similarly, let the chain
      expression for $C_2$ be $\sum_{i} s_i - \sum_i s'_i$. We know
      that $\sum_{i} t_i - \sum_i t'_i = \sum_{i} s_i - \sum_i s'_i $,
      but the set of $t_i$'s and $t'_i$'s is not the same as the set
      of $s_i$ and $s'_i$'s. However it should be possible to
      transform one form to the other using mathematical
      manipulations. This transformation will involve taking
      common terms, or multiplying two terms, or both.  Following
      previous explanation, this will force one of the expressions to
      have duplicate quantities, violating validity. Hence, the chain
      terms of $C_1$ and $C_2$ are identical.
      
    \end{proof}
    
    Consider an expression tree $\T$ for a valid expression
    $E$.  For a distinct pair of quantities $q_i, q_j$ participating
    in expression $E$, we denote by $n_i, n_j$ the leaves of the
    expression tree $\T$ representing $q_i, q_j,$ respectively. Let
    $n_{LCA}(q_i, q_j; \T)$ to be the lowest common ancestor node of
    $n_i$ and $n_j$. We also define $order(q_i,q_j;\T)$ to be true if
    $n_i$ appears in the left subtree of $n_{LCA}(q_i, q_j;\T)$ and
    $n_j$ appears in the right subtree of $n_{LCA}(q_i, q_j; \T)$ and
    set $order(q_i,q_j;\T)$ to false otherwise. Finally we define
    $\Op_{LCA}(q_i,q_j;\T)$ for a pair of quantities $q_i, q_j$ as
    follows :
    \begin{multline}\label{expr:lca}
      \Op_{LCA}(q_i,q_j,\T) = \\
      \begin{cases}
        + & \mbox{ if }  \Op(n_{LCA}(q_i,q_j;\T))=+ \\ 
        \times & \mbox{ if } \Op(n_{LCA}(q_i,q_j;\T))=\times \\
        - & \mbox{ if } \Op(n_{LCA}(q_i,q_j;\T))=- \text{ and } \\
        & order(q_i,q_j;\T)=true \\
        -_{reverse} & \mbox{ if } \Op(n_{LCA}(q_i,q_j;\T))=- \text{ and } \\
        & order(q_i,q_j;\T)=false \\
        \div & \mbox{ if } \Op(n_{LCA}(q_i,q_j;\T))=\div \text{ and } \\
        & order(q_i,q_j;\T)=true \\
        \div_{reverse} & \mbox{ if } \Op(n_{LCA}(q_i,q_j;\T))=\div \text{ and } \\
        & order(q_i,q_j;\T)=false \\
      \end{cases}
    \end{multline}

    \begin{definition}

      Given two expression trees $\T_1$ and $\T_2$ for the same
      expression $E$, $\T_1$ is {\em LCA-equivalent} to $\T_2$ if for
      every pair quantities $q_i,q_j$ in the expression $E$, we have
      $\Op_{LCA}(q_i,q_j,\T_1) = \Op_{LCA}(q_i,q_j,\T_2)$.

    \end{definition}
    
    \begin{theorem}
    
      All monotonic expression trees for an expression are
      LCA-equivalent to each other.
    
    \end{theorem}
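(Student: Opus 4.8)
The plan is to argue by strong induction on the number $n$ of quantities occurring in $E$. For $n=1$ the statement is vacuous, so I would assume $n\ge 2$ and that the claim already holds for every valid expression with fewer than $n$ quantities. Fix two monotonic expression trees $\T_1,\T_2$ for $E$ and let $C_1,C_2$ be the chains containing their roots. By Theorem~\ref{th:chain} these chains have the same type and the same set of chain terms $\{t_1,\dots,t_k\}$; since the root is an internal node the chain has at least one node, hence $k\ge 2$ chain terms, so each $t_m$ is a valid expression on a nonempty proper subset of the quantities of $E$. Because each quantity occurs in $E$ at most once and the chain terms of a given tree partition those occurrences, every quantity lies in a uniquely determined chain term, which by Theorem~\ref{th:chain} is the same one in $\T_1$ and $\T_2$. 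Assume without loss of generality that the common chain type is addition-subtraction; the multiplication-division case will be handled at the end by the obvious substitutions.

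First I would establish a structural fact: a monotonic addition-subtraction chain $C$ contains at most one subtraction node, and if it contains one, that node is the root of $C$. Indeed a non-root node of $C$ has its parent inside $C$, and a subtraction node whose parent is a subtraction node violates monotonicity condition~3, while a subtraction node whose parent is an addition node violates condition~1. Hence the chain expression of $C$ is either $\sum_m t_m$ or, when a subtraction node sits at the root of $C$, of the form $\left(\sum_{t\in P}t\right)-\left(\sum_{t\in N}t\right)$, where $(P,N)$ partitions the chain terms according to whether a term hangs off the left or the right subtree of that subtraction node (each of those two subtrees contains only additions and so evaluates to a plain sum). The payoff is that for a pair $q_i,q_j$ lying in two \emph{distinct} chain terms, $\Op_{LCA}(q_i,q_j;\T)$ is completely determined by which terms contain $q_i$ and $q_j$ and by which of $P,N$ those terms lie in: it is $+$ when the two terms are on the same side, it is $-$ when $q_i$'s term is in $P$ and $q_j$'s is in $N$, and it is $-_{reverse}$ otherwise — all of which I would read off directly from the definition of $order$.

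Now I would fix a pair $q_i,q_j$ and split into two cases. If $q_i$ and $q_j$ lie in the same chain term $t_m$, let $S_1,S_2$ be the subtrees producing $t_m$ in $\T_1,\T_2$; these are expression trees for the same expression $t_m$, and they are monotonic because the four monotonicity conditions are local and hence inherited by every subtree. Since $t_m$ has fewer than $n$ quantities, the induction hypothesis gives $\Op_{LCA}(q_i,q_j;S_1)=\Op_{LCA}(q_i,q_j;S_2)$; and because $S_1,S_2$ are downward closed and contain both leaves, the LCA of $q_i,q_j$ together with the left/right order of $q_i,q_j$ beneath it is the same whether computed in $S_\ell$ or in $\T_\ell$, so $\Op_{LCA}(q_i,q_j;\T_1)=\Op_{LCA}(q_i,q_j;\T_2)$. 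If instead $q_i$ and $q_j$ lie in different chain terms, the previous paragraph reduces the claim to showing that the partitions $(P_1,N_1)$ of $C_1$ and $(P_2,N_2)$ of $C_2$ coincide.

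This partition invariance is where I expect the real work to be. Since $C_1$ and $C_2$ are root chains, $\left(\sum_{t\in P_1}t\right)-\left(\sum_{t\in N_1}t\right)$ and $\left(\sum_{t\in P_2}t\right)-\left(\sum_{t\in N_2}t\right)$ are two ways of writing the same expression $E$ out of the same chain terms (one may degenerate to a pure sum, i.e.\ $N=\emptyset$). Subtracting the two representations, every chain term common to $P_1$ and $P_2$ or to $N_1$ and $N_2$ cancels, and one is left with an equality between the sum of the chain terms in $P_1\cap N_2$ and the sum of the chain terms in $N_1\cap P_2$ — two \emph{disjoint} sets of terms built from disjoint sets of quantities, since each quantity is used at most once. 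Exactly as in the proof of Theorem~\ref{th:chain}, rewriting one such sum into the other would force some quantity to be used more than once, so this is impossible unless both sets are empty; hence $P_1=P_2$, the different-term case is settled, and the induction closes. Finally I would note that the multiplication-division case goes through verbatim after replacing ``subtraction'' by ``division'', conditions~1 and~3 by conditions~2 and~4, sums by products, and $-,-_{reverse}$ by $\div,\div_{reverse}$, the cancellation in the last step simply becoming a multiplicative one.
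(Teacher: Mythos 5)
Your proof is correct and follows essentially the same route as the paper: strong induction on the number of quantities, using Theorem~\ref{th:chain} to identify the root chains and the induction hypothesis inside chain terms. The only difference is that you explicitly prove what the paper compresses into the sentence ``monotonic property ensures that the LCA operation will be identical'' --- namely the structural fact that a monotonic AS (resp.\ MD) chain has at most one subtraction (resp.\ division) node, sitting at the chain's top, and the cancellation argument showing the positive/negative partition of chain terms agrees across the two trees --- which is a faithful elaboration rather than a different approach.
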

    
    \begin{proof}
      
      We prove by induction on the number of quantities used in an
      expression.  For all expressions $E$ with $2$ quantities, there
      exists only one monotonic expression tree, and hence, the
      statement is trivially true.  This satisfies our base case.
      
      For the inductive case, we assume that for all expressions with
      $k<n$ quantities, the theorem is true. Now, we need to prove
      that any expression with $n$ nodes will also satisfy the
      property.

      Consider a valid (as in all cases) expression $E$, with
      monotonic expression trees $\T_1$ and $\T_2$. From
      theorem \ref{th:chain}, we know that the chains containing the
      roots of $\T_1$ and $\T_2$ have identical type and terms. Given
      two quantities $q_i, q_j$ of $E$, the lowest common ancestor of
      both $\T_1$ and $\T_2$ will either both belong to the chain
      containing the root, or both belong to one of the chain
      terms. If the LCA node is part of the chain for both $\T_1$ and
      $\T_2$, monotonic property ensures that the LCA operation will
      be identical. If the LCA node is part of a chain term (which is
      an expression tree of size less than $n$), the property is
      satisfied by induction hypothesis.
      
    \end{proof}    

   The theory just presented suggests that it is possible to uniquely
   decompose the overall problem to simpler steps and this will be
   exploited in the next section.

\section{Mapping Problems to Expression Trees}
\label{sec:method}

  Given the uniqueness properties proved in
  Sec.~\ref{sec:decomposition}, it is sufficient to identify the
  operation between any two relevant quantities in the text, in order
  to determine the unique valid expression. In fact, identifying the
  operation between {\em any} pair of quantities provides much needed
  redundancy given the uncertainty in identifying the operation from
  text, and we exploit it in our final joint inference.
  
  Consequently, our overall method proceeds as follows: given the
  problem text $P$, we detect quantities $q_1, q_2, \ldots, q_n$. We
  then use two classifiers, one for relevance and other to predict the
  LCA operations for a monotonic expression tree of the solution. Our
  training makes use of the notion of quantity schemas, which we
  describe in Section \ref{method:quantschema}. The distributional
  output of these classifiers is then used in a joint inference
  procedure that determines the final expression tree.

  Our training data consists of problem text paired with a monotonic
  expression tree for the solution expression and alignment of
  quantities in the expression to quantity mentions in the problem
  text. Both the relevance and LCA operation classifiers are trained
  on gold annotations.

  \subsection{Global Inference for Expression Trees} 
  \label{method:inf}

    In this subsection, we define the scoring functions corresponding to
    the decomposed problems, and show how we combine these scores to
    perform global inference.  For a problem $P$ with quantities $q_1,
    q_2, \ldots, q_n$, we define the following scoring functions:
    
    \begin{enumerate}

      \item $\Pair(q_i, q_j, op)$ : Scores the likelihood of
      $\Op_{LCA}(q_i,q_j,\T) = op$, where $\T$ is a monotone
      expression tree of the solution expression of $P$. A multiclass
      classifier trained to predict LCA operations
      (Section \ref{method:lca}) can provide these scores.

      \item $\Rel(q)$ : Scores the likelihood of quantity $q$ being an
      irrelevant quantity in $P$, that is, $q$ is not used in creating
      the solution. A binary classifier trained to predict whether a
      quantity $q$ is relevant or not
      (Section \ref{method:relevance}), can provide these scores.

    \end{enumerate}

    For an expression $E$, let $\I(E)$ be the set of all quantities in
    $P$ which are not used in expression $E$. Let $\T$ be a monotonic
    expression tree for $E$.  We define $\Score(E)$ of an expression
    $E$ in terms of the above scoring functions and a scaling
    parameter $w_{\Rel}$ as follows:
    \begin{align}
      \Score(E) = &
      w_{\Rel} \sum_{q \in \I(E)} \Rel(q) +   \label{eq:inf1}  \\
       & \sum_{q_i,q_j \notin \I(E)} \Pair(q_i, q_j, \Op_{LCA}(q_i, q_j, \T))
       \notag
    \end{align}

    Our final expression tree is an outcome of a constrained
    optimization process, following ~\cite{RothYi04,ChangRaRo12}. Our
    objective function makes use of the scores returned by
    $\Rel(\cdot)$ and $\Pair(\cdot)$ to determine the expression tree
    and is constrained by legitimacy and background knowledge
    constraints, detailed below.

    \begin{enumerate}

      \item \textbf{Positive Answer}: Most arithmetic problems asking
      for amounts or number of objects usually have a positive number
      as an answer. Therefore, while searching for the best scoring
      expression, we reject expressions generating negative answer.

      \item \textbf{Integral Answer}: Problems with questions such as
      `how many'' usually expect integral solutions. We only consider
      integral solutions as legitimate outputs for such problems.

    \end{enumerate}

    Let $\mathcal{C}$ be the set of valid expressions that can be
    formed using the quantities in a problem $P$, and which satisfy
    the above constraints. The inference algorithm now becomes the
    following: \begin{align} \arg\max_{E \in \mathcal{C}} \Score(E) \end{align}

    The space of possible expressions is large, and we employ a beam
    search strategy to find the highest scoring constraint satisfying
    expression~\cite{ChangRaRo12}. We construct an expression tree
    using a bottom up approach, first enumerating all possible sets of
    irrelevant quantities, and next over all possible expressions,
    keeping the top $k$ at each step. We give details below.

    \begin{enumerate}

      \item \textbf{Enumerating Irrelevant Quantities}: We generate a
      state for all possible sets of irrelevant quantities, ensuring
      that there is at least two relevant quantities in each state. We
      refer to each of the relevant quantities in each state as a
      term. Therefore, each state can be represented as a set of
      terms.

      \item \textbf{Enumerating Expressions}: For generating a next
      state $S'$ from $S$, we choose a pair of terms $t_i$ and $t_j$
      in $S$ and one of the four basic operations, and form a new term
      by combining terms $t_i$ and $t_j$ with the operation. Since we
      do not know which of the possible next states will lead to the
      optimal goal state, we enumerate all possible next states (that
      is, enumerate all possible pairs of terms and all possible
      operations); we prune the beam to keep only the top $k$
      candidates. We terminate when all the states in the beam have
      exactly one term.

    \end{enumerate}

    Once we have a top $k$ list of candidate expression trees, we
    choose the highest scoring tree which satisfies the
    constraints. However, there might not be any tree in the beam which
    satisfies the constraints, in which case, we choose the top
    candidate in the beam. We use $k=200$ in our experiments. 
    
    In order to choose the value for the $w_{\Rel}$, we search over
    the set $\{ 10^{-6}, 10^{-4}, 10^{-2}, 1, 10^2, 10^4, 10^6 \}$,
    and choose the parameter setting which gives the highest accuracy
    on the training data.

  \subsection{Quantity Schema} 
  \label{method:quantschema}  

  In order to generalize across problem types as well as over simple
  manipulations of the text, it is necessary to train our system only
  with relevant information from the problem text. E.g., for the
  problem in example $2$, we do not want to take decisions based on
  how Tom earned money. Therefore, there is a need to extract the
  relevant information from the problem text. To this end, we
  introduce the concept of a {\em quantity schema} which we extract
  for each quantity in the problem's text. Along with the question
  asked, the quantity schemas provides all the information needed to
  solve most arithmetic problems.

  A quantity schema for a quantity $q$ in problem $P$ consists of the
  following components.

  \begin{enumerate}

    \item \textbf{Associated Verb} For each quantity $q$, we detect
      the verb associated with it. We traverse up the dependency tree
      starting from the quantity mention, and choose the first verb
      we reach. We used the easy first dependency parser~\cite{GoldbergEl10}.

    \item \textbf{Subject of Associated Verb} We detect the noun
      phrase, which acts as subject of the associated verb (if one exists).

    \item \textbf{Unit} We use a shallow parser to detect the phrase
      $p$ in which the quantity $q$ is mentioned. All tokens of the
      phrase (other than the number itself) are considered as unit
      tokens. Also, if $p$ is followed by the prepositional phrase
      ``of'' and a noun phrase (according to the shallow parser
      annotations), we also consider tokens from this second noun
      phrase as unit tokens. Finally, if no unit token can be extracted,
      we assign the unit of the neighboring quantities as the unit of
      $q$ (following previous work \cite{HosseiniHaEt14}).

    \item \textbf{Related Noun Phrases} We consider all noun phrases which
      are connected to the phrase $p$ containing quantity $q$, with NP-PP-NP
      attachment. If only one quantity is mentioned in a sentence, we
      consider all noun phrases in it as related.

    \item \textbf{Rate} We determine whether quantity $q$ refers to a
      {\em rate} in the text, as well as extract two unit components
      defining the rate. For example, ``7 kilometers per hour'' has
      two components ``kilometers'' and ``hour''. Similarly, for
      sentences describing unit cost like ``Each egg costs 2
      dollars'', ``2'' is a rate, with units ``dollars'' and ``egg''.

  \end{enumerate}

  In addition to extracting the quantity schemas for each quantity, we
  extract the surface form text which poses the question. For example,
  in the question sentence, ``How much will John have to pay if he
  wants to buy 7 oranges?'', our extractor outputs ``How much will
  John have to pay'' as the question.

  \subsection{Relevance Classifier}
  \label{method:relevance}

    We train a binary SVM classifier to determine, given problem text
    $P$ and a quantity $q$ in it, whether $q$ is needed
    in the numeric expression generating the solution. We train on
    gold annotations and use the score of the classifier as the
    scoring function $\Rel(\cdot)$.

  \subsubsection{Features}

    The features are extracted from the quantity schemas and can
    be broadly categorized into three groups:

    \begin{enumerate}

      \item \textbf{Unit features}: Most questions specifically
      mention the object whose amount needs to be computed, and hence
      questions provide valuable clue as to which quantities can be
      irrelevant.  We add a feature for whether the unit of quantity
      $q$ is present in the question tokens. Also, we add a feature
      based on whether the units of other quantities have better
      matches with question tokens (based on the number of tokens
      matched), and one based on the number of quantities which have
      the maximum number of matches with the question tokens.

      \item \textbf{Related NP features}: Often units are not enough
      to differentiate between relevant and irrelevant quantities.
      Consider the following:

      \begin{table}[H]
      \centering
      \small
      \begin{tabular}{|p{6cm}|}
        \hline Example 3 \\
        \hline Problem : {\em There are 8 apples in a pile on the
        desk. Each apple comes in a package of 11. 5 apples are added
        to the pile. How many apples are there in the pile?} \\
        \hline Solution : $(8+5) = 13$ \\
        \hline
      \end{tabular}
      \label{tab:example3}
      \end{table}

      The relevance decision depends on the noun phrase ``the pile'',
      which is absent in the second sentence. We add a feature
      indicating whether a related noun phrase is present in the
      question. Also, we add a feature based on whether the related
      noun phrases of other quantities have better match with the
      question. Extraction of related noun phrases is described in
      Section \ref{method:quantschema}.

      \item \textbf{Miscellaneous Features}: When a problem mentions
      only two quantities, both of them are usually relevant. Hence,
      we also add a feature based on the number of quantities
      mentioned in text.

    \end{enumerate}

    We include pairwise conjunction of the above features.

  \subsection{LCA Operation Classifier}
  \label{method:lca}  

    In order to predict LCA operations, we train a multiclass SVM
    classifier.  Given problem text $P$ and a pair of quantities $p_i$
    and $p_j$, the classifier predicts one of the six labels described
    in Eq.~\ref{expr:lca}.  We consider the confidence scores for each
    label supplied by the classifier as the scoring function
    $\Pair(\cdot)$.

  \subsubsection{Features}

    We use the following categories of features:

    \begin{enumerate}

      \item \textbf{Individual Quantity features}: Dependent verbs
      have been shown to play significant role in solving addition and
      subtraction problems \cite{HosseiniHaEt14}. Hence, we add the
      dependent verb of the quantity as a feature. Multiplication and
      division problems are largely dependent on rates described in
      text. To capture that, we add a feature based on whether the
      quantity is a rate, and whether any component of rate unit is
      present in the question. In addition to these quantity schema
      features, we add selected tokens from the neighborhood of the
      quantity mention. Neighborhood of quantities are often highly
      informative of LCA operations, for example, ``He got 80 more
      marbles'', the term ``more'' usually indicates addition. We add
      as features adverbs and comparative adjectives mentioned in a
      window of size $5$ around the quantity mention.

      \item \textbf{Quantity Pair features}: For a pair ($q_i, q_j$)
      we add features to indicate whether they have the same dependent
      verbs, to indicate whether both dependent verbs refer to the
      same verb mention, whether the units of $q_i$ and $q_j$ are the
      same and, if one of them is a rate, which component of the unit
      matches with the other quantity's unit. Finally, we add a
      feature indicating whether the value of $q_i$ is greater than
      the value of $q_j$.

      \item \textbf{Question Features}: Finally, we add a few features
      based on the question asked. In particular, for arithmetic
      problems where only one operation is needed, the question
      contains signals for the required operation. Specifically, we
      add indicator features based on whether the question mentions
      comparison-related tokens (e.g., ``more'', ``less'' or
      ``than''), or whether the question asks for a rate (indicated by
      tokens such as ``each'' or ``one'').

    \end{enumerate}

    We include pairwise conjunction of the above features. For both
    classifiers, we use the Illinois-SL
    package \footnote{
    http://cogcomp.cs.illinois.edu/page/software\_view/Illinois-SL}
    under default settings.

\section{Experimental Results}
\label{sec:results}

    \begin{table*}
    \centering \small  
    \begin{tabular}{|l||C{1.2cm}|C{1.2cm}||C{1.2cm}|C{1.2cm}||C{1.2cm}|C{1.2cm}||}
      \hline
      & \multicolumn{2}{c||}{AI2} & \multicolumn{2}{c||}{IL} 
      & \multicolumn{2}{c||}{CC} \\
      \cline{2-7}
      & Relax & Strict & Relax & Strict & Relax & Strict \\ \hline\hline    
      All features & \textbf{88.7} & \textbf{85.1} & \textbf{75.7} & \textbf{75.7} 
      & 60.0 & 25.8 \\ \hline
      No Individual Quantity features & 73.6 & 67.6 & 52.0 & 52.0 & 29.2 & 0.0 
      \\ \hline
      No Quantity Pair features & 83.2 & 79.8 & 63.6 & 63.6 & 49.3 & 16.5\\ \hline
      No Question features & 86.8 & 83.9 & 73.3 & 73.3 & \textbf{60.5} & 
      \textbf{28.3}\\ \hline
    \end{tabular}  
    \caption{\footnotesize Performance of LCA Operation classifier
    on the datasets AI2, IL and CC.}\label{table:lca}
    \end{table*}

  In this section, we evaluate the proposed method on publicly
  available datasets of arithmetic word problems. We evaluate
  separately the relevance and LCA operation classifiers, and show the
  contribution of various features. Lastly, we evaluate the
  performance of the full system, and quantify the gains achieved by
  the constraints.

  \subsection{Datasets}
    
    We evaluate our system on three datasets, each of which
    comprise a different category of arithmetic word problems.

    \begin{enumerate}

      \item \textbf{AI2 Dataset}: This is a collection of $395$
        addition and subtraction problems, released by
        \cite{HosseiniHaEt14}. They performed a $3$-fold cross
        validation, with every fold containing problems from different
        sources. This helped them evaluate robustness to domain
        diversity. We follow the same evaluation setting.

      \item \textbf{IL Dataset}: This is a collection of arithmetic
        problems released by \cite{RoyViRo15}. Each of these problems
        can be solved by performing one operation. However, there are
        multiple problems having the same template. To counter this,
        we perform a few modifications to the dataset. First, for each
        problem, we replace the numbers and nouns with the part of
        speech tags, and then we cluster the problems based on
        unigrams and bigrams from this modified problem text. In
        particular, we cluster problems together whose unigram-bigram
        similarity is over $90$\%. We next prune each cluster to keep
        at most $5$ problems in each cluster. Finally we create the
        folds ensuring all problems in a cluster are assigned to the
        same fold, and each fold has similar distribution of all
        operations. We have a final set of $562$ problems, and we use
        a $5$-fold cross validation to evaluate on this dataset.

      \item \textbf{Commoncore Dataset}: In order to test our system's
        ability to handle multi-step problems, we create a new dataset
        of multi-step arithmetic problems. The problems were extracted from 
        {\em www.commoncoresheets.com}. In total, there were 
        $600$ problems, $100$ for each of the following types:
        
        \begin{enumerate}
          \item Addition followed by Subtraction
          \item Subtraction followed by Addition
          \item Addition and Multiplication
          \item Addition and Division
          \item Subtraction and Multiplication
          \item Subtraction and Division
        \end{enumerate}    

        This dataset had no irrelevant quantities. Therefore, we did not
        use the relevance classifier in our evaluations.

        In order to test our system's ability to generalize across
        problem types, we perform a $6$-fold cross validation, with
        each fold containing all the problems from one of the
        aforementioned categories. This is a more challenging setting
        relative to the individual data sets mentioned above, since we
        are evaluating on multi-step problems, without ever looking at
        problems which require the same set of operations.

    \end{enumerate}

  \subsection{Relevance Classifier}
    
    Table \ref{table:rel} evaluates the performance of the relevance
    classifier on the AI2 and IL datasets. We report two accuracy
    values: Relax - fraction of quantities which the classifier got
    correct, and Strict - fraction of math problems, for which all
    quantities were correctly classified. We report accuracy using all
    features and then removing each feature group, one at a time.
    
    \begin{table}[H]
    \centering \small  
    \begin{tabular}{|l||C{0.75cm}|C{0.75cm}||C{0.75cm}|C{0.75cm}||}
      \hline
      & \multicolumn{2}{c||}{AI2} & \multicolumn{2}{c||}{IL} \\
      \cline{2-5}
      & Relax & Strict & Relax & Strict \\ \hline \hline   
      All features & 94.7 & 89.1 & \textbf{95.4} & \textbf{93.2} \\ \hline
      No Unit features & 88.9 & 71.5 & 92.8 & 91.0 \\ \hline
      No NP features & \textbf{94.9} & \textbf{89.6} & 95.0 & 91.2 \\ \hline
      No Misc. features & 92.0 & 85.9 & 93.7 & 89.8 \\ \hline
    \end{tabular}  
    \caption{\footnotesize Performance of Relevance classifier on the
    datasets AI2 and IL.} \label{table:rel}
    \end{table}  
  
    We see that features related to units of quantities play the
    most significant role in determining relevance of quantities. Also,
    the related NP features are not helpful for the AI2 dataset.

  \subsection{LCA Operation Classifier}

    Table \ref{table:lca} evaluates the performance of the LCA
    Operation classifier on the AI2, IL and CC datasets. As before, we
    report two accuracies - Relax - fraction of quantity pairs for
    which the classifier correctly predicted the LCA operation, and
    Strict - fraction of math problems, for which all quantity pairs
    were correctly classified. We report accuracy using all features
    and then removing each feature group, one at a time. 
    
    The strict and relaxed accuracies for IL dataset are identical,
    since each problem in IL dataset only requires one operation. The
    features related to individual quantities are most significant;
    in particular, the accuracy goes to $0.0$ in the CC dataset, without
    using individual quantity features. The question features are not
    helpful for classification in the CC dataset. This can be
    attributed to the fact that all problems in CC dataset require
    multiple operations, and questions in multi-step problems usually
    do not contain information for each of the required operations.
    
  \subsection{Global Inference Module}

    Table \ref{table:full} shows the performance of our system in
    correctly solving arithmetic word problems. We show the impact of
    various contraints, and also compare against previously best known
    results on the AI2 and IL datasets. We also show results using 
    each of the two constraints separately, and using no constraints 
    at all.
    
    \begin{table}[H]
    \begin{center}
    \begin{tabular}{|l||c|c|c|}
      \hline
      & AI2 & IL & CC \\ \hline\hline
      All constraints & 72.0 & \textbf{73.9} & \textbf{45.2} \\ \hline
      Positive constraint & \textbf{78.0} & 72.5 & 36.5 \\ \hline
      Integral constraint & 71.8 & 73.4 & 39.0 \\ \hline
      No constraint & 77.7 & 71.9 & 29.6 \\ \hline\hline
      \cite{HosseiniHaEt14} & 77.7 & - & -\\ \hline
      \cite{RoyViRo15} & - & 52.7 & - \\ \hline
      \cite{KushmanZeBa14} & 64.0 & 73.7 & 2.3 \\ \hline
    \end{tabular}  
    \end{center} 
    \caption{\footnotesize Accuracy in correctly solving arithmetic problems.
    First four rows represent various configurations of our system. We achieve
    state of the art results in both AI2 and IL datasets.}\label{table:full}
    \end{table}

    The previously known best result in the AI2 dataset is reported in
    \cite{HosseiniHaEt14}. Since we follow the exact same evaluation
    settings, our results are directly comparable.  We achieve state
    of the art results, without having access to any additional
    annotated data, unlike \cite{HosseiniHaEt14}, who use labeled data
    for verb categorization.  For the IL dataset, we acquired the
    system of \cite{RoyViRo15} from the authors, and ran it with the
    same fold information. We outperform their system by an absolute
    gain of over $20 \%$. We believe that the improvement was mainly
    due to the dependence of the system of \cite{RoyViRo15} on lexical
    and neighborhood of quantity features. In contrast, features from
    quantity schemas help us generalize across problem types. Finally,
    we also compare against the template based system of
    \cite{KushmanZeBa14}.  \cite{HosseiniHaEt14} mentions the result
    of running the system of \cite{KushmanZeBa14} on AI2 dataset, and
    we report their result here.  For IL and CC datasets, we used the
    system released by \cite{KushmanZeBa14}.

    The integrality constraint is particularly helpful when division
    is involved, since it can lead to fractional answers. It does not
    help in case of the AI2 dataset, which involves only addition and
    subtraction problems. The role of the constraints becomes more
    significant in case of multi-step problems and, in particular,
    they contribute an absolute improvement of over $15\%$ over the
    system without constraints on the CC dataset. The template based
    system of \cite{KushmanZeBa14} performs on par with our system on
    the IL dataset. We believe that it is due to the small number of
    equation templates in the IL dataset. It performs poorly on the CC
    dataset, since we evaluate on unseen problem types, which do not
    ensure that equation templates in the test data will be seen in
    the training data.

  \subsection{Discussion}

    The leading source of errors for the classifiers are erroneous
    quantity schema extraction and lack of understanding of unknown or
    rare verbs. For the relevance classifier on the AI2 dataset, $25\%$
    of the errors were due to mistakes in extracting the quantity
    schemas and $20\%$ could be attributed to rare verbs. For the LCA
    operation classifier on the same dataset, $16\%$ of the errors were
    due to unknown verbs and $15\%$ were due to mistakes in extracting
    the schemas. The erroneous extraction of accurate quantity schemas
    is very significant for the IL dataset, contributing $57\%$ of the
    errors for the relevance classifier and $39\%$ of the errors for the
    LCA operation classifier.  For the operation classifier on the CC
    dataset, $8\%$ of the errors were due to verbs and $16\%$ were due to
    faulty quantity schema extraction.  Quantity Schema extraction is
    challenging due to parsing issues as well as some non-standard
    rate patterns, and it will be one of the future work targets.  For
    example, in the sentence, ``How many 4-dollar toys can he buy?'', we
    fail to extract the rate component of the quantity 4.

\section{Conclusion}
\label{sec:conc}

  This paper presents a novel method for understanding and solving a
  general class of arithmetic word problems. Our approach can solve
  all problems whose solution can be expressed by a read-once
  arithmetic expression, where each quantity from the problem text
  appears at most once in the expression.  We develop a novel
  theoretical framework, centered around the notion of monotone
  expression trees, and showed how this representation can be used to
  get a unique decomposition of the problem. This theory naturally
  leads to a computational solution that we have shown to uniquely
  determine the solution –- determine the arithmetic operation between
  any two quantities identified in the text. This theory underlies our
  algorithmic solution –- we develop classifiers and a constrained
  inference approach that exploits redundancy in the information, and
  show that this yields strong performance on several benchmark
  collections.  In particular, our approach achieves state of the art
  performance on two publicly available arithmetic problem datasets
  and can support natural generalizations. Specifically, our approach
  performs competitively on multistep problems, even when it has never
  observed the particular problem type before.
 
  Although we develop and use the notion of expression trees in the
  context of numerical expressions, the concept is more general. In
  particular, if we allow leaves of expression trees to represent
  variables, we can express algebraic expressions and equations in
  this framework. Hence a similar approach can be targeted towards
  algebra word problems, a direction we wish to investigate in the
  future.

  The datasets used in the paper are available for download at
  {\fontsize{7.6}{8} 
  {\em
  http://cogcomp.cs.illinois.edu/page/resource\_view/98}.

\section*{Acknowledgments}
This research was sponsored by DARPA (under agreement number
FA8750-13-2-0008), and a grant from AI2.  Any opinions, findings,
conclusions or recommendations are those of the authors and do not
necessarily reflect the view of the agencies.

\bibliographystyle{acl}
\bibliography{arithmetic,ccg,cited}

\end{document}